\newtheorem{assumption}{Assumption}
\newtheorem{lemma}{Lemma}
\newtheorem{theorem}{Theorem}
\theoremstyle{definition}
\newcommand\red[1]{{\color{red}#1}}
\newcommand\blue[1]{{\color{blue}#1}}
\newcommand{\bfC}{\mathbf{C}}
\newcommand{\bfD}{\mathbf{D}}
\newcommand{\bfI}{\mathbf{I}}
\newcommand{\bfx}{\mathbf{x}}
\newcommand{\bfg}{\mathbf{g}}
\newcommand{\bfeps}{\boldsymbol{\epsilon}}
\newcommand{\bfsigma}{\boldsymbol{\sigma }}
\newcommand{\real}{\mathbb{R}}
\DeclareMathOperator*{\argmax}{argmax}
\DeclareMathOperator{\argtop}{argtop}
\DeclareMathOperator{\diag}{diag}
\definecolor{mygray}{gray}{0.5}
\algrenewcommand{\algorithmiccomment}[1]{// #1}
\newcommand\gray[1]{{\color{mygray}#1}}
\begin{document}

\title{Preconditioned Sharpness-Aware Minimization: \\
Unifying Analysis and a Novel Learning Algorithm
\thanks{*Equal contribution. Work was supported in part by NSF grants 2312547, 2212318, and 2126052.}
}

\author{\IEEEauthorblockN{Yilang Zhang*}
\IEEEauthorblockA{\textit{Dept. of ECE, Univ. of Minnesota}\\
Minneapolis, MN 55455, USA \\
zhan7453@umn.edu}
\and
\IEEEauthorblockN{Bingcong Li*}
\IEEEauthorblockA{\textit{Dept. of CS, ETH Z\"urich}\\
8092 Z\"urich, Switzerland \\
bingcong.li@inf.ethz.ch}
\and
\IEEEauthorblockN{Georgios B. Giannakis}
\IEEEauthorblockA{\textit{Dept. of ECE, Univ. of Minnesota}\\
Minneapolis, MN 55455, USA \\
georgios@umn.edu}
}

\maketitle

\begin{abstract}
Targeting solutions over `flat' regions of the loss landscape, sharpness-aware minimization (SAM) has emerged as a powerful tool to improve generalizability of deep neural network based learning. While several SAM variants have been developed to this end, a unifying approach that also guides principled algorithm design has been elusive. This contribution leverages preconditioning (pre) to unify SAM variants and provide not only unifying convergence analysis, but also valuable insights. Building upon preSAM, a novel algorithm termed infoSAM is introduced to address the so-called adversarial model degradation issue in SAM by adjusting gradients depending on noise estimates. Extensive numerical tests demonstrate the superiority of infoSAM across various benchmarks.
\end{abstract}

\begin{IEEEkeywords}
sharpness-aware minimization, preconditioning, generalizability, convergence analysis, machine learning. 
\end{IEEEkeywords}

\section{Introduction}
Advances in deep neural network (DNN) architectures have led to impressive success across various domains including language, audio, and vision~\cite{devlin2018bert,gong2021ast,sam4vit}. Owing to the markedly high dimensionality, DNNs can memorize a large gamut of training data~\cite{zhang2016}. As a result, small loss during training does not guarantee generalization to unseen data. Catalyzing generalizability of DNNs through fine-grained training schemes remains a long-standing and prohibitively critical challenge. 

Popular approaches to cope with generalization include data augmentation and regularization~\cite{dropout2014,cubuk2019}. 
Albeit effective, data augmentation is often picked in a handcrafted way, and may not universally fit various models and datasets. This prompts combining augmentations, but the optimal mix requires extensive trials. On the other hand, regularization methods such as weight decay and dropout, though straightforward to use, largely omit data properties. For complex models, simply stacking data augmentation and regularization is insufficient~\cite{sam4vit}. In image classification for example, optimal augmentation and regularization can be class dependent~\cite{balestriero2022effects}. These limitations unveil the need for fine-grained approaches that jointly account for data and model characteristics. 

One such approach resorts to advanced optimization by carefully accounting for the loss landscape, which depends on latent properties of both data distribution, and the DNN architecture. Among possible solutions on the loss curve, those lying on a flatter valley have higher potential for improving generalizability~\cite{keskar2016}. Supporting evidence comes from theoretical analysis~\cite{maksym2022,wen2023} and empirical experimentation~\cite{jiang2020}. The resultant sharpness-aware minimization (SAM)~\cite{foret2021}, seeks a flatter region by forcing the surrounding neighborhood to have small loss. 
Various approaches have also been proposed to further boost the effectiveness of SAM~\cite{kwon2021,kim2022,zhuang2022,yang2022,liimplicit}. Unfortunately, a unifying framework is lacking to encompass existing SAM variants, and inspire the principled design of novel approaches. 

\begin{table}
	\caption{Existing instances of our preSAM framework}
	\centering
	\vspace{-0.2cm}
	\begin{tabular}{lcc}
	\toprule
	Approach                & Precond.             & Additional as.     \\
	\midrule
	ASAM~\cite{kwon2021}   & CP                        & non-divergent             \\
	FisherSAM~\cite{kim2022}   & CP                    & lo. bound stoch. grad.     \\
	$\ell_\infty$ or $\ell_1$-SAM~\cite{foret2021} &     OP                         & N/A  \\
	modified-SSAM~\cite{mi2022} &     OP                         & N/A              \\
	Lazy SAM~\cite{jiang2023,zhao222} & OP 		& N/A \\
    VaSSO~\cite{li2023} & OP & N/A \\
	\textbf{InfoSAM (ours)} &     OP                         & N/A                  \\
	\bottomrule
	\end{tabular}
	\label{tab.summary}
\end{table}

Toward this goal, the present work relies on \emph{preconditioning} to unify SAM variants; hence, the term preconditioned (pre) SAM. Depending on where preconditioning is effected, PreSAM categorizes existing SAM variants into objective/constraint preconditioning (OP/CP); see also Table~\ref{tab.summary}. 
Unifying convergence analysis will be offered for both CP and OP. This will not only establish convergence for e.g., ASAM~\cite{kwon2021} and FisherSAM~\cite{kim2022}, but will also guide the development of novel algorithms. Building on preSAM, a novel OP approach will be developed to ameliorate the effect of stochastic gradient noise that causes what is termed \emph{adversarial model degradation (AMD)}. This new approach, dubbed infoSAM, relies on a preconditioner that adjusts gradient entries depending on noise estimates, thus effectively bypassing the effect of gradient noise and leading to improved generalization. 
All in all, our contribution is three-fold.
\begin{enumerate}[leftmargin=.3cm]
\item[\textbullet] Rooted on preconditioning, a framework termed preSAM is developed to unify existing SAM variants, and categorize them as OP/CP according to their preconditioners. 
\item[\textbullet] PreSAM offers a unifying  convergence analysis for its two subcategories, which fulfills the missing analysis and unjustified experimental preferences of many SAM variants. 
\item[\textbullet] InfoSAM is our novel OP algorithm that handles the AMD issue in SAM. Numerical tests showcase the effectiveness of infoSAM in enhancing generalizability.
\end{enumerate}

\textbf{Notation}. Bold lowercase (capital) letters denote vectors (matrices); $\| \cdot \|$ and $\langle \cdot, \cdot \rangle$ stand for $\ell_2$-norm and inner product; $\mathbb{KL}(\cdot || \cdot)$ represents the KL divergence; and $\mathbf{e}_i \in \real^d$ is the $i$th column of the identity matrix $\bfI_d \in \real^{d \times d}$.

\section{SAM Recap}
Let $\bfx \in \real^d$ denote the parameters of a DNN, and $f$ the nonconvex empirical risk (loss) given a dataset ${\cal D}:=\{\mathbf{a}_i, b_i\}_{i=1}^D$ with feature $\mathbf{a}_i$ and label $b_i$. To find a solution lying in a flat basin of $f$, SAM enforces small loss on the neighborhood of $\bfx$. This is achieved by the minimax problem
\begin{align}\label{eq.prob}
	\min_{\bfx} \max_{\| \bfeps \| \le \rho} f \big(\bfx + \bfeps \big)
\end{align}
where $\bfx + \bfeps$ acts as the most ``adversarial'' model in the neighborhood sphere of radius $\rho$. The highly-nonconvex nature of \eqref{eq.prob} discourages solving the inner maximization exactly. SAM effects this using two approximations
\begin{align}\label{eq.sam_epsilon_full}
	\bfeps_t  = \argmax_{\| \bfeps \| \le \rho} f(\bfx_t + \bfeps) 
	& \overset{(a)}{\approx} \argmax_{\| \bfeps \| \le \rho} f(\bfx_t) + \langle \nabla f( \bfx_t), \bfeps \rangle \nonumber \\	
	&\overset{(b)}{\approx} \argmax_{\| \bfeps \| \le \rho} \langle \bfg_t(\bfx_t), \bfeps \rangle  
\end{align}
where $(a)$ follows from a first-order Taylor expansion, and $(b)$ replaces the gradient $\nabla f(\bfx_t)$ with the stochastic gradient $\bfg_t(\bfx_t)$. For convenience, we will refer to~\eqref{eq.sam_epsilon_full}, as \textit{SAM subproblem}. The latter admits the closed-form solution
\begin{align}\label{eq.sam_epsilon}
	\bfeps_t = \rho \bfg_t(\bfx_t) / \| \bfg_t(\bfx_t) \|.
\end{align}
SAM then updates $\bfx_t$ using the stochastic gradient $\bfg_t (\bfx_t + \bfeps_t)$ at $\bfx_t + \bfeps_t$. The steps of SAM are listed under Alg.~\ref{alg.sam}. 

\section{Unifying SAM via preconditioning}
This section introduces a unifying approach to finding the adversarial model, where popular SAM variants are subsumed as special cases. All proofs are deferred to the Appendix. 

\subsection{Preconditioned SAM}
PreSAM leverages preconditioning to encompass several SAM variants, each with different preconditioners. In its most general form, \textbf{preSAM} finds $\bfeps_t$ by solving a preconditioned version of \eqref{eq.sam_epsilon_full}:
\begin{align}\label{eq.precondition_full}
	\textbf{PreSAM:}~~~ 
	\max_{\bfeps} \langle \bfC_t \bfg_t(\bfx_t), \bfeps  \rangle	 ~~\text{s.t.}~~ \| \bfD_t \bfeps \| \le \rho.
\end{align}
Here, $\bfC_t, \bfD_t \in \real^{d \times d} $ are preconditioners that alter the geometry of the SAM subproblem. In doing so, the adversarial model can be equipped with designable properties. 
In particular, $\bfC_t$ skews the direction of $\bfg_t(\bfx_t)$ in the objective, while $\bfD_t$ reshapes the constraint set. Both $\bfC_t$ and $\bfD_t$ can change over iterations, allowing preSAM to adapt to the local geometry for each $t$. The original SAM subproblem~\eqref{eq.sam_epsilon_full} can be recovered by simply fixing $\bfC_t= \bfD_t = \bfI_d$. Supposing for simplicity that $\bfD_t$ is invertible, preSAM also admits a closed-form solution
\begin{align}\label{eq.precondition_eps}
	\bfeps_t = \rho  \bfD_t^{-2} \bfC_t \bfg_t(\bfx_t) / \|\bfD_t^{-1} \bfC_t \bfg_t(\bfx_t) \|.
\end{align}

\begin{algorithm}[t]
    \caption{PreSAM} \label{alg.sam}
    \begin{algorithmic}[1]
    	\State \textbf{Initialize:} $\bfx_0, \rho$
    	\For {$t=0,\dots,T-1$}
    		\State Sample a minibatch $\mathcal{B}_t$
    		\State Denote the stochastic gradient on $\mathcal{B}_t$ as $\bfg_t(\cdot)$
			\State (\textbf{preSAM}) Find $\bfeps_t$ via a unified manner \eqref{eq.precondition_eps}. 
			 \newline
			\Comment \gray{\blue{\textbf{SAM}}: $\bfC_t=\bfD_t = \bfI_d$;} \gray{\red{\textbf{InfoSAM}}: $\bfC_t$ and $\bfD_t$ via~\eqref{eq.infosam_epsilon}}
			\State Calculate stochastic gradient $\bfg_t(\bfx_t + \bfeps_t)$
			\State Update model via $\bfx_{t+1} = \bfx_t - \eta \bfg_t(\bfx_t + \epsilon_t)$
		\EndFor
		\State \textbf{Return:} $\bfx_T$
	\end{algorithmic}
\end{algorithm}

Before delving into specific choices for $\bfC_t$ and $\bfD_t$ in existing algorithms, a natural question is whether the preconditioners conflict with finding a `good' solution of~\eqref{eq.prob}. The challenge arises from the fact that~\eqref{eq.precondition_full} is no longer obtained from Taylor's expansion of $f(\bfx_t + \bfeps)$. We answer this question under several standard assumptions for nonconvex optimization and SAM~\cite{ghadimi2013,bottou2018,mi2022,zhuang2022}. 

\begin{assumption}\label{as.1}
	 $f(\bfx)$ is lower bounded, i.e., $f(\bfx) \ge f^*,\forall \bfx$.
\end{assumption}
\begin{assumption}\label{as.2}
	$\bfg(\bfx)$ is $L$-Lipschitz, i.e., $\| \bfg(\bfx) - \bfg(\mathbf{y}) \| \le L \| \bfx  - \mathbf{y}\|, \forall \bfx, \mathbf{y}$.
\end{assumption}
\begin{assumption}\label{as.3}
	$\bfg(\bfx)$ is unbiased with bounded variance, i.e., $\mathbb{E} [\bfg(\bfx) | \bfx] = \nabla f(\bfx)$, and $\mathbb{E} [\| \bfg(\bfx) - \nabla f(\bfx) \|^2 | \bfx] \le \sigma^2$.
\end{assumption}

Under these mild assumptions, the unified convergence is established in the following theorem.
\begin{theorem}[Unified convergence]\label{thm.psam}
	Suppose As. \ref{as.1} -- \ref{as.3} hold.	Let $\eta_t \equiv \eta = \frac{\eta_0}{ \sqrt{T}} \le \frac{2}{3L}$, and $\rho = \frac{\rho_0}{\sqrt{T}}$. In addition, suppose $\| \bfD_t^{-1} \| \le D_0, \forall t$. Then, preSAM in Alg.~\ref{alg.sam} guarantees that 
\begin{align*}
		 &  \frac{1}{T} \sum_{t=0}^{T-1} \mathbb{E} \| \nabla f(\bfx_t )\|^2 \le \mathcal{O}\bigg( \frac{  f (\bfx_0) {-} f^*  }{ \eta_0 \sqrt{T}}  +  \frac{L\rho_0^2 D_0^2}{\eta_0 \sqrt{T}}  + \frac{L \eta_0 \sigma^2}{\sqrt{T}} \bigg), \\
		& {\frac{1}{T}}\sum_{t=0}^{T-1}\mathbb{E} \| \nabla f(\bfx_t {+} \bfeps_t)\|^2 \le \frac{2}{T}\sum_{t=0}^{T-1}\mathbb{E} \| \nabla f(\bfx_t )\|^2 + \frac{2 L^2 \rho_0^2 D_0^2}{T}.
	\end{align*}
\end{theorem}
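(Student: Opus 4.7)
\bigskip

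\noindent\textbf{Proof proposal.} The plan is to run the standard nonconvex SGD template adapted to the biased gradient $\bfg_t(\bfx_t+\bfeps_t)$ used in the update, and to extract all preconditioner dependence through a single scalar bound on $\|\bfeps_t\|$. First I would control the perturbation size. From the closed form \eqref{eq.precondition_eps}, writing $\mathbf{v}_t := \bfD_t^{-1}\bfC_t \bfg_t(\bfx_t)$, one has $\bfeps_t = \rho\,\bfD_t^{-1}\mathbf{v}_t/\|\mathbf{v}_t\|$, hence
\begin{equation*}
\|\bfeps_t\| \;\le\; \rho\,\|\bfD_t^{-1}\|_{\text{op}} \;\le\; \rho D_0,
\end{equation*}
which is the only place where $\bfC_t,\bfD_t$ enter the analysis. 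Note in particular that this holds for \emph{any} $\bfC_t$, explaining why no assumption on $\bfC_t$ is required.

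Next I would invoke the $L$-smoothness descent lemma (As.~\ref{as.2}) on the update $\bfx_{t+1}=\bfx_t-\eta\bfg_t(\bfx_t+\bfeps_t)$:
\begin{equation*}
f(\bfx_{t+1}) \le f(\bfx_t) - \eta\langle \nabla f(\bfx_t), \bfg_t(\bfx_t+\bfeps_t)\rangle + \tfrac{L\eta^2}{2}\|\bfg_t(\bfx_t+\bfeps_t)\|^2.
\end{equation*}
Writing the inner product as $-\|\nabla f(\bfx_t)\|^2 + \langle\nabla f(\bfx_t), \nabla f(\bfx_t) - \bfg_t(\bfx_t+\bfeps_t)\rangle$ and splitting $\nabla f(\bfx_t)-\bfg_t(\bfx_t+\bfeps_t)$ into a bias piece $\nabla f(\bfx_t)-\nabla f(\bfx_t+\bfeps_t)$ (controlled by $L\|\bfeps_t\|\le L\rho D_0$ via As.~\ref{as.2}) and a noise piece $\nabla f(\bfx_t+\bfeps_t)-\bfg_t(\bfx_t+\bfeps_t)$ (controlled by $\sigma^2$ in mean-square via As.~\ref{as.3}), then applying Young's inequality, would yield a bound of the form
\begin{equation*}
\mathbb{E}[f(\bfx_{t+1})] \le \mathbb{E}[f(\bfx_t)] - c_1\eta\,\mathbb{E}\|\nabla f(\bfx_t)\|^2 + c_2\eta(L^2\rho^2 D_0^2+\sigma^2) + \tfrac{L\eta^2}{2}\mathbb{E}\|\bfg_t(\bfx_t+\bfeps_t)\|^2.
\end{equation*}
The quadratic term is absorbed via $\|\bfg_t(\bfx_t+\bfeps_t)\|^2\le 2\sigma^2+4\|\nabla f(\bfx_t)\|^2+4L^2\rho^2 D_0^2$, so that under the step-size condition $\eta\le 2/(3L)$ the $\mathbb{E}\|\nabla f(\bfx_t)\|^2$ coefficient remains strictly negative. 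Telescoping over $t=0,\dots,T-1$, dividing by $T$, rearranging, and substituting $\eta=\eta_0/\sqrt{T}$ and $\rho=\rho_0/\sqrt{T}$ gives the first bound.

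For the second inequality, the $L$-Lipschitzness of $\nabla f$ (As.~\ref{as.2}) and the bound $\|\bfeps_t\|\le\rho D_0$ directly give
\begin{equation*}
\|\nabla f(\bfx_t+\bfeps_t)\|^2 \le 2\|\nabla f(\bfx_t)\|^2 + 2\|\nabla f(\bfx_t+\bfeps_t)-\nabla f(\bfx_t)\|^2 \le 2\|\nabla f(\bfx_t)\|^2 + 2L^2\rho^2 D_0^2,
\end{equation*}
which, upon averaging and inserting $\rho^2=\rho_0^2/T$, yields the stated inequality. The main obstacle I anticipate is the biased-gradient term $\langle \nabla f(\bfx_t),\bfg_t(\bfx_t+\bfeps_t)\rangle$: because $\bfeps_t$ is built from the same minibatch as $\bfg_t(\bfx_t+\bfeps_t)$, one cannot simply take a conditional expectation to kill the noise. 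The workaround is to treat the entire deviation $\bfg_t(\bfx_t+\bfeps_t)-\nabla f(\bfx_t)$ bluntly in mean-square, invoking As.~\ref{as.3} as a \emph{uniform} variance bound (so it survives the randomness of the anchor point $\bfx_t+\bfeps_t$), and accepting the resulting bias term $L^2\rho^2 D_0^2$; this is exactly the term that vanishes as $T\to\infty$ under the schedule $\rho=\rho_0/\sqrt{T}$.
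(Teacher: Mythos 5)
Your overall architecture matches the paper's --- the bound $\|\bfeps_t\|\le\rho D_0$, the smoothness descent lemma, the telescoping, and your argument for the second inequality is verbatim the paper's --- but the decomposition you choose for the first-order error term contains a genuine gap. You split $\nabla f(\bfx_t)-\bfg_t(\bfx_t+\bfeps_t)$ into a bias piece $\nabla f(\bfx_t)-\nabla f(\bfx_t+\bfeps_t)$ and a noise piece $\nabla f(\bfx_t+\bfeps_t)-\bfg_t(\bfx_t+\bfeps_t)$, and then, as you yourself note, the noise piece cannot be killed by conditional expectation because $\bfeps_t$ is built from the same minibatch. Your workaround --- bounding the cross term $\eta\,\langle\nabla f(\bfx_t),\,\text{noise}\rangle$ ``bluntly in mean-square'' via Young's inequality --- does not prove the theorem: Young with any constant parameter turns this into $c\,\eta\,\mathbb{E}\|\nabla f(\bfx_t)\|^2+c'\eta\sigma^2$, and the $c'\eta\sigma^2$ term, summed over $T$ steps and divided by $\eta T$, leaves a \emph{non-vanishing} constant $\mathcal{O}(\sigma^2)$ in the final rate (choosing the Young parameter to shrink with $\eta$ instead forces the $\|\nabla f\|^2$ coefficient to exceed $\eta$, which cannot be absorbed). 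The theorem's bound has every term decaying as $1/\sqrt{T}$, so this route falls short. A secondary issue: Assumption~\ref{as.3} bounds the variance conditioned on the evaluation point, and invoking it at the random, minibatch-correlated point $\bfx_t+\bfeps_t$ is not licensed by the assumption as stated.

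The paper's proof avoids both problems by decomposing at $\bfx_t$ rather than at $\bfx_t+\bfeps_t$: it writes $\nabla f(\bfx_t)-\bfg_t(\bfx_{t+\frac{1}{2}})=\big[\nabla f(\bfx_t)-\bfg_t(\bfx_t)\big]+\big[\bfg_t(\bfx_t)-\bfg_t(\bfx_{t+\frac{1}{2}})\big]$. The first bracket is exactly conditionally zero-mean (As.~\ref{as.3} applied at $\bfx_t$, which is measurable with respect to the past and hence independent of the fresh minibatch), so it contributes nothing to the cross term. The second bracket is bounded \emph{pathwise} by $L\|\bfeps_t\|\le L\rho D_0$ --- note As.~\ref{as.2} is a Lipschitz condition on the stochastic gradient map $\bfg(\cdot)$ itself, which is what makes this deterministic bound possible. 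Young's inequality is then applied only to this small $\mathcal{O}(\rho)$ piece, yielding $\frac{L\eta^2}{2}\|\nabla f(\bfx_t)\|^2+\frac{L\rho^2D_0^2}{2}$, whose accumulated contribution is $\frac{L\rho_0^2D_0^2}{2\eta_0\sqrt{T}}$ and vanishes. The same routing through $\bfg_t(\bfx_t)$ is used to bound $\mathbb{E}\|\bfg_t(\bfx_{t+\frac{1}{2}})\|^2$, so As.~\ref{as.3} is only ever invoked at $\bfx_t$. To repair your proof, replace your bias/noise split with this one; the rest of your outline then goes through.
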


Thm.~\ref{thm.psam} reveals that $\bfD_t$ has to be designed carefully to avoid slowing down convergence. In contrast, $\bfC_t$ is more flexible to choose as it does not explicitly influence the convergence rate, which is yet critical for generalization because it determines how powerful the adversarial model is. 

Next, we elaborate on choices of $\bfC_t$ and $\bfD_t$ to link preSAM to existing SAM variants. We will also dive deeper into their influences on convergence, which has been overlooked by existing works. Even though it is possible to jointly design $\bfC_t$ and $\bfD_t$, most SAM variants only work with a single preconditioner. Depending on whether $\bfC_t = \bfI_d$ or $\bfD_t = \bfI_d$, preSAM can be further categorized into constraint preconditioning (CP) and objective preconditioning (OP).

\subsection{Constraint preconditioning (CP)}
CP aims to alter the constraint geometry in~\eqref{eq.precondition_full}, where it keeps $\bfC_t = \bfI_d$, and designs $\bfD_t$ on demand. Essentially, $\bfD_t$ converts the $\ell_2$-norm ball $\{\bfeps: \| \bfeps \| \le \rho\}$ into an ellipsoid
Intuitively, this is helpful when knowing a priori that certain dimensions contribute more to the adversarial model. A caveat for designing $\bfD_t$ is that its inversion should be affordable; cf.~\eqref{eq.precondition_eps}. As a consequence, most existing CP approaches rely on diagonal $\bfD_t$, as discussed next.

\textbf{Scale-invariant adversarial model via CP.} It was pointed out in~\cite{dinh2017} that proper rescaling of NN weights does not change the loss function. This means there exist multiple adversarial models with the same loss, rendering the optimal one indistinguishable from the rest. ASAM~\cite{kwon2021} copes with this issue by rescaling the constraint set, which serves as a specific instance of CP. In its simplest form, ASAM adopts $\bfD_t = \diag ( |\bfx_t|^{-1})$, where $|\cdot|$ and $\cdot^{-1}$ are entry-wise operators. If $[\bfx_t]_i$ is small, ASAM tends to increase the perturbation $[\bfeps_t]_i$. 

\textbf{Fisher adversarial model via CP.} While SAM seeks $\bfeps_t$ within a Euclidean ball, this can be extended to more sophisticated spaces. For example, FisherSAM~\cite{kim2022} considers a ball induced by KL divergence, namely $\mathbb{E}_{\cal D} \big[ \mathbb{KL}( p(b_i|\mathbf{a}_i, \bfx_t + \bfeps ) || p(b_i|\mathbf{a}_i, \bfx_t) ) \big] \le \rho$. Modified with several approximations for computational efficiency, FisherSAM ends up with a specific form of CP, where $\bfD_t = \diag ( |\bfg_t|)$.

\textbf{CP can challenge convergence.} As stated in Thm.~\ref{thm.psam}, the convergence rate of CP critically depends on $D_0$. Unfortunately, both ASAM and FisherSAM are on the edge of divergence. For ASAM, it holds that $D_0 = \max_t \|\bfx_t \|_{\infty}$, which could be unbounded unless assuming non-divergence. For FisherSAM, $D_0 = \max_t \|\bfg_t^{-1}\|_{\infty}$ can also be unbounded and slowdowns convergence as $[\bfg_t]_i$ can be arbitrarily small.

Moreover, for CP to attain the same convergence rate as SAM, it requires $\rho = \rho_0/\sqrt{T} \propto 1/D_0$. Upon ASAM convergence, it typically holds that $D_0 < 1$. This explains the empirical observation that a larger $\rho$ helps ASAM to perform best~\cite{kwon2021}. The same was also corroborated in our experiments, where adopting the same $\rho$ as SAM degrades ASAM's performance. Somehow ironically, an enlarged $\rho$ makes the Taylor approximation $(a)$ in~\eqref{eq.sam_epsilon_full} inaccurate, which can weaken the adversarial model. This leads to another issue for CP, that is, to determine the best $\rho$ through extra effort.

\subsection{Objective preconditioning (OP)}
For the objective in~\eqref{eq.precondition_full}, OP fixes $\bfD_t = \bfI_d$, and adapts merely $\bfC_t$. As asserted by Thm.~\ref{thm.psam}, OP is more flexible since convergence rate is not explicitly dependent on its preconditioner. In addition, OP is less stringent than CP because: i) $\bfC_t$ need not be invertible; and ii) scaling $\bfC_t$ has no impact on $\bfeps_t$. The latter can be verified by replacing $\bfC_t$ with $\alpha \bfC_t, \forall \alpha >0$, which does not alter the solution~\eqref{eq.precondition_eps}. By redirecting $\bfg_t(\bfx_t)$, OP seeks an improved adversarial model. Depending on the specific $\bfC_t$, OP can be used for various purposes. 

\textbf{Adversarial models in non-ellipsoidal neighborhood via OP.} While CP's constraint set is an ellipsoid, OP gives rise to a non-ellipsoidal neighborhood when $\bfC_t$ is properly designed. Table~\ref{tab.op_norm} exemplifies three choices of $\bfC_t$ for which the resultant $\bfeps_t$ amounts to solving~\eqref{eq.precondition_full} under $\ell_1$, $\ell_\infty$, or $n$-support norm ball~\cite{argyriou2012} constraints. The former two are found in~\cite{foret2021}, while the last is our extension, where a $n$-support norm ball can be viewed as a combination of $\ell_1$ and $\ell_2$ norm constraint. 

\textbf{Sparse perturbation via OP.} The second and third method in Table~\ref{tab.op_norm} both result in a sparse $\bfeps_t$. This helps reduce the backpropagation complexity of $\bfg_t(\bfx_t+ \bfeps_t)$. More involved approaches along this line include SSAM~\cite{mi2022}, which not only assumes bounded gradient, but also suffers from rate slower than SAM. These issues can be addressed by changing the algorithmic order; that is, first sparsify the gradient via OP by setting the corresponding entries of $\bfC_t$ to $0$ as~\cite[Alg. 2]{mi2022}, and then use infoSAM~\eqref{eq.infosam_epsilon} to obtain $\bfeps_t$. We term this method modified SSAM, and our experiments show that it matches the performance of vanilla SSAM. 

\begin{table}[t]
	\centering
	\tabcolsep=0.075cm
	\caption{OP and its equivalent constraint.}
	\vspace{-.2cm}
	\begin{tabular}{cc}
	\toprule
	OP    & Equiv. constr. for \eqref{eq.precondition_full}                   \\
	\midrule 
	$\bfC_t = \diag( |\bfg_t|^{-1})$         & $\| \bfeps \|_\infty \le \rho$                       \\
	$\bfC_t = \diag(\mathbf{e}_i)$ with $i = \argmax | [\bfg_t(\bfx_t)]_i |$   &  $\| \bfeps \|_1 \le \rho$                                      \\
	$\bfC_t = \diag(\sum_{i\in \mathcal{I}} \mathbf{e}_i)$ with $\mathcal{I} = \argtop_n(|\bfg_t(\bfx_t)|)$         & $\| \bfeps \|_\text{n-supp} \le \rho$                                  \\
	\bottomrule
	\end{tabular}
	\label{tab.op_norm}
\end{table}

\textbf{Lazy adversary model via OP.} Lazy SAM~\cite{jiang2023,zhao222} switches between SAM's adversarial objective~\eqref{eq.prob} and empirical risk minimization (ERM) to lower the computational cost. With ERM-induced update $\bfx_{t+1} = \bfx_t - \eta \bfg_t(\bfx_t)$, this avoids SAM's second gradient computation $\bfg(\bfx_t + \bfeps_t)$. Given that $\bfC_t = \mathbf{0}$ in~\eqref{eq.precondition_full} leads to $\bfeps_t = \mathbf{0}$, preSAM is able to recover lazy SAM by setting $\bfC_t = \mathbf{0}$ whenever switching to ERM. 

\textbf{Chain of preconditioners.} It is also possible to equip an adversarial model with multiple desired properties through a cascade of preconditioners. For example, if $\{ \bfC_{t,i} \}_{i=1}^I$ are valid OP choices, $\bfC_t = \prod_{i=1}^I \bfC_{t,i}$ is also a valid OP preconditioner. 


\section{InfoSAM}
This section develops a new instance of preSAM that copes with the adversarial model degradation challenge of SAM.

\subsection{Adversarial model degradation (AMD)}
The stochastic noise in $\bfg_t(\bfx_t)$ can markedly harm the adversarial model $\bfx_t + \bfeps_t$ obtained via~\eqref{eq.sam_epsilon}~\cite{li2023}. We term this \textit{adversarial model degradation},  
and further elaborate on its harmfulness, which motivates our novel algorithm, infoSAM.

Consider SAM in the ideally noise-free case, i.e., $\bfg_t(\bfx_t) = \nabla f(\bfx_t)$. Then, the perturbation of the $i$th dimension satisfies $[\bfeps_t]_i \propto [\nabla f(\bfx_t)]_i$; cf. \eqref{eq.sam_epsilon}. This matches the intuition for finding the most adversarial model, since it holds that 
\begin{align}\label{eq.ideal_sam}
	f(\bfx_t + \lambda & \mathbf{e}_i) - f(\bfx_t)  \leq \lambda \langle \nabla f(\bfx_t), \mathbf{e}_i \rangle + \frac{L\lambda^2}{2} \\
	& \stackrel{(a)}{=} \alpha [\nabla f(\bfx_t)]_i^2 + \frac{L \alpha^2}{2} [\nabla f(\bfx_t)]_i^2 \propto [\nabla f(\bfx_t)]_i^2 \nonumber 
\end{align}
where $(a)$ is by taking $\lambda = \alpha [\nabla f(\bfx_t)]_i$ for some $\alpha > 0$. When $[\nabla f(\bfx_t)]_i$ is large, the adversarial model has the potential to induce a higher loss by moving more toward this dimension.

In practice, SAM relies on $\bfg_t(\bfx_t)$ rather than $\nabla f(\bfx_t)$, with which~\eqref{eq.ideal_sam} can hardly hold. When the stochastic noise is dominant,  $[\nabla \bfg(\bfx_t)]_i$ can even correspond to a descent direction. When training a ResNet-18 on CIFAR10, we observed that the signal-to-noise ratio (SNR) is around $\mathcal{O}(10^{-2})$ throughout 200 training epochs. This suggests that the gradient noise is indeed a severe issue for SAM. Additional examples on how AMD affects the convergence behavior of SAM in an asymmetric valley can be found in App.~\ref{sec.apdx.AMD}. 

\subsection{A novel OP approach to handle AMD}
Unfortunately, no preSAM approach is available to deal with the AMD challenge caused by gradient noise. This section develops such an OP-based algorithm that we term infoSAM. 

Our conception of infoSAM is straightforward -- when seeking the adversarial model, we should be more cautious on dimensions with smaller SNR since they are less informative. Quantitatively, with $[\bfsigma_t]_i^2$ denoting the variance of $[\bfg_t (\bfx_t)]_i$, infoSAM's perturbation is $ [\bfeps_t]_i \propto [\bfg_t (\bfx_t)]_i/[\bfsigma_t]_i^2$. App.~\ref{sec.apdx.infosam} details how infoSAM works using a numerical case study. 

While alleviating AMD using $[\bfeps_t]_i \propto [\bfg_t (\bfx_t)]_i/[\bfsigma_t]_i^2$ is intriguing, the variance vector $\bfsigma_t^2$ is generally intractable. Inspired by~\cite{zhuang2020adabelief}, we estimate $\bfsigma_t^2$ by the squared difference between $\bfg_t(\bfx_t)$'s exponentially moving average (EMA) and $\bfg_t(\bfx_t)$ itself. The EMA 
$\mathbf{m}_t$ is accumulated as
\begin{subequations}\label{eq.infosam_epsilon}
\vspace{-0.15cm}
\begin{equation}
	\mathbf{m}_t = \alpha 	\mathbf{m}_{t-1} + (1-\alpha) \bfg_t(\bfx_t)
\end{equation}
where $0<\alpha < 1$ is a hyperparameter. Vector  $\mathbf{m}_t$ serves as an estimate of $\nabla f(\bfx_t)$, which is then leveraged to estimate
\begin{equation}
	\hat{\bfsigma}_t^2 = \big(\mathbf{m}_t -  \bfg_t(\bfx_t)\big)^2.
\end{equation}
With $\hat{\boldsymbol{\Sigma}}_t := \diag(\hat{\bfsigma}_t^2)$, infoSAM obtains its $\bfeps_t$ via
\begin{equation}
	\bfeps_t = \argmax_{\| \bfeps \| \leq \rho} \langle  \hat{\boldsymbol{\Sigma}}_t^{-1}\bfg_t(\bfx_t), \bfeps \rangle = \rho \frac{\hat{\boldsymbol{\Sigma}}_t^{-1}\bfg_t(\bfx_t)}{ \| \hat{\boldsymbol{\Sigma}}_t^{-1}\bfg_t(\bfx_t) \| }.
\end{equation}
\end{subequations}

The step-by-step implementation of infoSAM is summarized in Alg.~\ref{alg.sam}. It is also worth noting that infoSAM can be used jointly with CP methods such as ASAM and FisherSAM, which has been added to our future research agenda. 

\begin{table}[t]
	\caption{Comparison of infoSAM against other baselines.}
	\vspace{-0.2cm}
	\centering
	\tabcolsep=0.16cm
	\begin{tabular}{cccccc}
	\toprule
	 & Architecture          & SGD                & SAM                & ASAM               & InfoSAM    
	 \\
	\midrule
	\multirow{4}{*}{\rotatebox{90}{~\textbf{CIFAR10}}} 
	& ResNet              & 96.25$_{\pm0.06}$  & 96.58$_{\pm0.10}$  & 96.33$_{\pm0.09}$  & \textbf{96.71}$_{\pm0.09}$ \\
	&DenseNet            & 96.65$_{\pm0.13}$  & 96.94$_{\pm0.11}$  & 96.73$_{\pm0.18}$  & \textbf{97.09}$_{\pm0.07}$ \\
	&WideResNet        & 97.08$_{\pm0.16}$  & 97.32$_{\pm0.11}$  & 97.15$_{\pm0.05}$  & \textbf{97.56}$_{\pm0.12}$  \\
	&PyramidNet          & 97.39$_{\pm0.09}$  & 97.85$_{\pm0.14}$  & 97.56$_{\pm0.11}$  & \textbf{98.04}$_{\pm0.06}$  \\
	\midrule 
	\multirow{4}{*}{\rotatebox{90}{~\textbf{CIFAR100}}} 
	& ResNet              & 77.90$_{\pm0.07}$  & 80.96$_{\pm0.12}$  & 79.91$_{\pm0.04}$  & \textbf{81.31}$_{\pm0.15}$  \\
	& DenseNet           & 81.62$_{\pm0.19}$  & 83.94$_{\pm0.08}$  & 82.75$_{\pm0.10}$  & \textbf{84.09}$_{\pm0.12}$ \\
	&WideResNet        & 81.71$_{\pm0.13}$  & 84.88$_{\pm0.10}$  & 83.54$_{\pm0.14}$  & \textbf{85.01}$_{\pm0.07}$  \\
	& PyramidNet         & 83.50$_{\pm0.12}$  & 85.60$_{\pm0.11}$  & 83.72$_{\pm0.09}$  & \textbf{85.83}$_{\pm0.11}$  \\
	\bottomrule
	\end{tabular}
	\label{tab.cifar}
\end{table}

\section{Numerical tests}\label{sec.numerical}
Here we test infoSAM's numerical efficiency. Implementation details are deferred to App.~\ref{apdx.num}.

\subsection{CIFAR10 and CIFAR100}

The evaluation starts with image classification on benchmarks CIFAR10 and CIFAR100~\cite{krizhevsky2009learning}. The backbone architectures are convolutional neural networks including ResNet-18~\cite{He_2016_CVPR}, DenseNet-121~\cite{Huang_2017_CVPR}, WideResNet-28-10~\cite{zagoruyko2016wide}, and PyramidNet-110~\cite{Han_2017_CVPR}. Besides infoSAM, we also test stochastic gradient descent (SGD), SAM, and ASAM as baselines. 

The test accuracies are gathered in Tab.~\ref{tab.cifar}. The proposed infoSAM achieves the highest accuracy in all model setups, validating that AMD can be alleviated through proper preconditioning. The results also suggest that CP can be delicate when $\rho$ is not chosen properly. As discussed after Thm.~\ref{thm.psam}, CP approaches such as ASAM rely on a large $\rho$ to achieve comparable performance over SAM. This matches the results in Tab.~\ref{tab.cifar}, where ASAM underperforms SAM when adopting the same $\rho$, and only slightly improves over SGD. This demonstrates that CP has to be used cautiously, and further justifies our preference of OP for tackling the AMD issue. 

\subsection{ImageNet}

\begin{figure}[t]
	\centering
		\subfloat[]{\includegraphics[width=.5\linewidth]{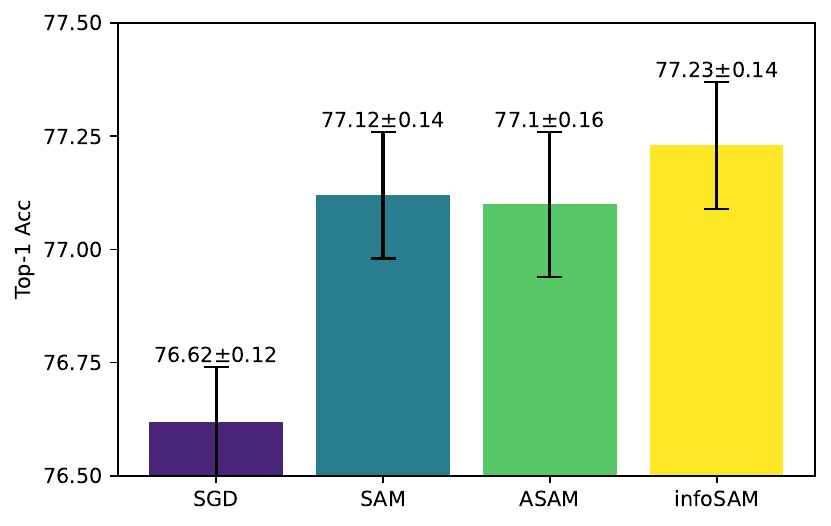}}
		\subfloat[]{\includegraphics[width=.5\linewidth]{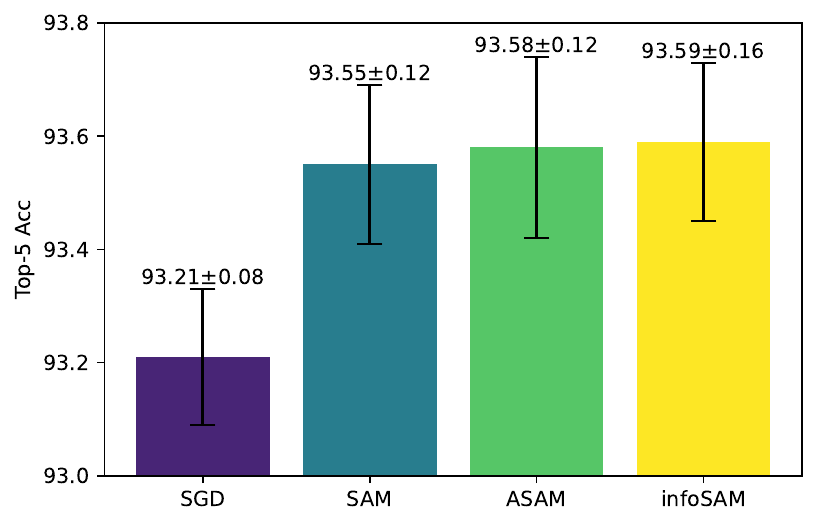}}
	\vspace{-.2cm}
	\caption{(a) Top-1 and (b) top-5 accuracies on ImageNet.}
	\label{fig.imagenet}
\end{figure}

Next, we investigate the performance of infoSAM on large-scale experiments by training a ResNet-50~\cite{He_2016_CVPR} on ImageNet~\cite{imagenet2009}. Fig.~\ref{fig.imagenet} plots the top-1 and top-5 accuracy of tested algorithms. It can be observed that infoSAM has the best top-1 as well as top-5 accuracies. 
Again, the CP-based ASAM does not catch up with SAM when using the same $\rho$.

\subsection{Label noise}

\begin{figure}
	\vspace{-0.3cm}
	\centering
		\includegraphics[width=.85\linewidth]{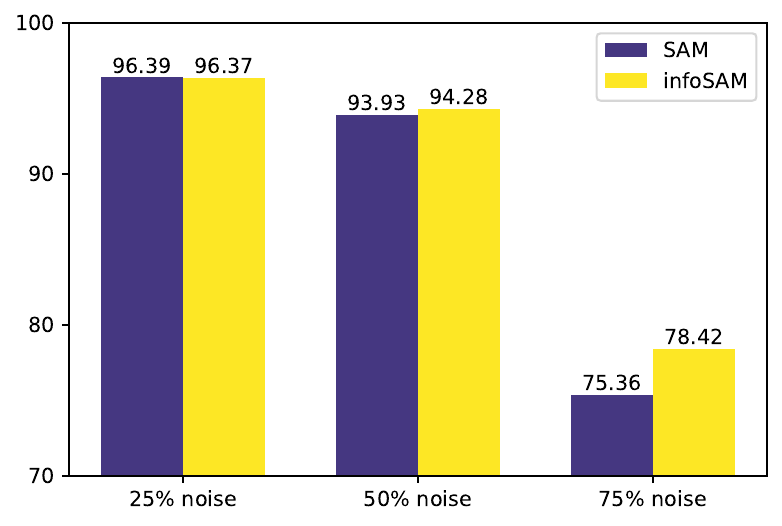}
	\vspace{-0.3cm}
	\caption{Performance under different levels of label noise.}
	 \label{fig.lbnoise}
\end{figure}

SAM is known to exhibit robustness against large label noise in the training set~\cite{foret2021}. Since the loss landscape can be heavily perturbed, it is expected that infoSAM outperforms SAM. In our experiments, we consider the classical noisy-label setting, where a fraction of the training labels are randomly flipped, whereas the test set remains clean. 
A ResNet-18~\cite{He_2016_CVPR} is trained on CIFAR10 with label noise levels $\{25\%, 50\%, 75\% \}$. It can be seen from Fig.~\ref{fig.lbnoise} that infoSAM markedly improves SAM in high-level label noise.  

\section{Conclusions}
We developed a preconditioning-based SAM framework that provides: i) unifying convergence analysis of SAM variants; ii) valuable insights of experimental results; and, iii) guidelines to develop novel SAM algorithms. Within this framework, infoSAM can tackle the AMD challenge of SAM, and thus improves generalization across various benchmarks.

\clearpage

\bibliography{datactr}
\bibliographystyle{IEEEtran}

\clearpage

\appendices
\setlength{\abovedisplayskip}{7pt}
\setlength{\belowdisplayskip}{7pt}

\section{Missing proofs}
For notational simplicity, we first rewrite Alg.~\ref{alg.sam} as
\begin{subequations}\label{eq.alg_rewrite}
\begin{align}
	\bfx_{t+\frac{1}{2}} &= \bfx_t + \bfeps_t,~~ \text{where} ~~\bfeps_t = \rho \frac{ \bfD_t^{-2} \bfC_t \bfg_t}{ \|\bfD_t^{-1} \bfC_t \bfg_t \|} \\
	 \bfx_{t+1} &= \bfx_t -  \eta_t \bfg_t(\bfx_{t+\frac{1}{2}}).
\end{align}
\end{subequations}
It follows that
\begin{align}\label{eq.apex.eps_norm_bound}
	\| \bfeps_t \| \leq \rho \| \bfD_t^{-1} \|	\frac{\| \bfD_t^{-1} \bfC_t \bfg_t \|}{ \|\bfD_t^{-1} \bfC_t \bfg_t \|} \leq \rho D_0.
\end{align}

Before presenting our proof, we first provide several useful lemmas to support the proof of our main results.

\subsection{Useful lemmas}

\begin{lemma}\label{apdx.lemma1}
	Alg.~\ref{alg.sam} (or equivalently iteration~\eqref{eq.alg_rewrite}) ensures that
	\begin{align*}
		&\eta_t \mathbb{E} \big[ \langle \nabla f(\bfx_t), \nabla f(\bfx_t) - \bfg_t(\bfx_{t+\frac{1}{2}})  \rangle \big] \le \nonumber \\
		&\hspace{3.5cm}\frac{L\eta_t^2 }{2} \mathbb{E} \big[ \| \nabla f(\bfx_t)\|^2 \big]  + \frac{LD_0^2\rho^2}{2}.
	\end{align*}
\end{lemma}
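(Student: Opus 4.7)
The plan is to split the inner product by adding and subtracting $\nabla f(\bfx_{t+\frac{1}{2}})$, writing
\begin{align*}
\nabla f(\bfx_t)-\bfg_t(\bfx_{t+\frac{1}{2}})
&= \bigl[\nabla f(\bfx_t)-\nabla f(\bfx_{t+\frac{1}{2}})\bigr] \\
&\quad + \bigl[\nabla f(\bfx_{t+\frac{1}{2}})-\bfg_t(\bfx_{t+\frac{1}{2}})\bigr].
\end{align*}
I handle the two brackets separately. This decomposition isolates a deterministic ``perturbation bias'' piece that will be controlled by smoothness, plus a zero-mean stochastic piece that should vanish after taking expectation.

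For the first bracket, I apply Cauchy--Schwarz and then the $L$-Lipschitz gradient bound from Assumption~\ref{as.2}, followed by the perturbation-norm bound $\|\bfeps_t\|\le \rho D_0$ from~\eqref{eq.apex.eps_norm_bound}. This yields
$$\eta_t\bigl\langle \nabla f(\bfx_t),\,\nabla f(\bfx_t)-\nabla f(\bfx_{t+\frac{1}{2}})\bigr\rangle \;\le\; L\rho D_0\,\eta_t\,\|\nabla f(\bfx_t)\|.$$
A single application of Young's inequality with weights chosen so that the two terms pair as $\bigl(\sqrt{L}\,\eta_t\|\nabla f(\bfx_t)\|\bigr)\cdot\bigl(\sqrt{L}\,\rho D_0\bigr)$ delivers exactly $\frac{L\eta_t^2}{2}\|\nabla f(\bfx_t)\|^2+\frac{L D_0^2\rho^2}{2}$, which is the target upper bound.

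For the second bracket, I condition on the sigma-field generated by $\bfx_{t+\frac{1}{2}}$ (under the standard convention in the SAM literature that the stochastic gradient at $\bfx_{t+\frac{1}{2}}$ is drawn independently from the noise shaping $\bfeps_t$, so Assumption~\ref{as.3} applies at the perturbed point). Since $\nabla f(\bfx_t)$ is measurable with respect to this sigma-field, unbiasedness gives
$$\mathbb{E}\bigl[\langle \nabla f(\bfx_t),\nabla f(\bfx_{t+\frac{1}{2}})-\bfg_t(\bfx_{t+\frac{1}{2}})\rangle\bigr]=0,$$
so the stochastic piece drops out in expectation. Combining the two parts and taking the total expectation gives the claimed inequality.

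The main obstacle is conceptual rather than computational: justifying that the cross term in the second bracket has zero conditional mean, given that $\bfx_{t+\frac{1}{2}}$ itself depends on stochastic gradient information through $\bfeps_t$. Resolving this cleanly requires invoking the customary fresh-minibatch (or conditional-independence) convention used throughout SAM convergence analyses; otherwise an extra bias term would appear and would need to be absorbed using Lipschitzness and the $\rho D_0$ bound, making the constants worse than in the stated lemma. Once that conditioning is handled, everything else reduces to Cauchy--Schwarz, smoothness, and a single Young's inequality, with no heavy computation.
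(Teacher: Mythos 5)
Your decomposition inserts $\nabla f(\bfx_{t+\frac{1}{2}})$ and then tries to kill the stochastic bracket $\nabla f(\bfx_{t+\frac{1}{2}})-\bfg_t(\bfx_{t+\frac{1}{2}})$ by conditioning on $\bfx_{t+\frac{1}{2}}$ and invoking unbiasedness. This is the genuine gap, and you have correctly diagnosed it yourself: in Alg.~\ref{alg.sam} the \emph{same} minibatch $\mathcal{B}_t$ is used to form $\bfeps_t$ (hence $\bfx_{t+\frac{1}{2}}$) and to evaluate $\bfg_t(\bfx_{t+\frac{1}{2}})$, so $\bfx_{t+\frac{1}{2}}$ is correlated with the noise in $\bfg_t(\cdot)$ and $\mathbb{E}[\bfg_t(\bfx_{t+\frac{1}{2}})\mid\bfx_{t+\frac{1}{2}}]\neq\nabla f(\bfx_{t+\frac{1}{2}})$ in general. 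Assumption~\ref{as.3} only licenses unbiasedness at a point that is measurable with respect to the filtration \emph{before} $\mathcal{B}_t$ is drawn, i.e.\ at $\bfx_t$; there is no fresh-minibatch convention in this paper to appeal to, so your second bracket does not vanish in expectation and the proof as written does not close.

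The paper resolves exactly this by choosing the other natural decomposition: it inserts $\bfg_t(\bfx_t)$ rather than $\nabla f(\bfx_{t+\frac{1}{2}})$, writing $\nabla f(\bfx_t)-\bfg_t(\bfx_{t+\frac{1}{2}})=[\nabla f(\bfx_t)-\bfg_t(\bfx_t)]+[\bfg_t(\bfx_t)-\bfg_t(\bfx_{t+\frac{1}{2}})]$. The first bracket is zero-mean conditioned on $\bfx_t$, legitimately, since $\bfx_t$ predates $\mathcal{B}_t$. The second bracket compares the \emph{same} stochastic gradient map at two nearby points, so Assumption~\ref{as.2} (which is stated for $\bfg$, not merely for $\nabla f$) gives $\|\bfg_t(\bfx_t)-\bfg_t(\bfx_{t+\frac{1}{2}})\|\le L\|\bfeps_t\|\le L\rho D_0$ pathwise, and Cauchy--Schwarz plus the same Young step you use yields the stated constants. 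Your fallback of absorbing the residual bias via Lipschitzness would work but, as you note, costs an extra factor that the lemma's constants do not allow. To repair your argument, switch the intermediate point from $\nabla f(\bfx_{t+\frac{1}{2}})$ to $\bfg_t(\bfx_t)$; the rest of your machinery (Cauchy--Schwarz, $\|\bfeps_t\|\le\rho D_0$, Young) then goes through verbatim.
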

\begin{proof}
	To start with, we have that
	\begin{align*}
		&\big\langle \nabla f(\bfx_t), \nabla f(\bfx_t) - \bfg_t(\bfx_{t+\frac{1}{2}})  \big\rangle \nonumber \\
		&= \langle \nabla f(\bfx_t), \nabla f(\bfx_t) - \bfg_t(\bfx_t) + \bfg_t(\bfx_t) - \bfg_t(\bfx_{t+\frac{1}{2}})  \rangle.
	\end{align*}
	Taking expectation conditioned on $\bfx_t$, we arrive at
	\begin{align*}
		& \mathbb{E} \big[ \big\langle \nabla f(\bfx_t), \nabla f(\bfx_t) - \bfg_t(\bfx_{t+\frac{1}{2}})  \big\rangle | \bfx_t \big]  \\
		& = \mathbb{E} \big[ \langle \nabla f(\bfx_t), \nabla f(\bfx_t) - \bfg_t(\bfx_t)   \rangle | \bfx_t \big]  \\
		&\qquad+ \mathbb{E} \big[ \langle \nabla f(\bfx_t), \bfg_t(\bfx_t) - \bfg_t(\bfx_{t+\frac{1}{2}})  \rangle | \bfx_t \big]  \\
		& = \mathbb{E} \big[ \langle \nabla f(\bfx_t), \bfg_t(\bfx_t) - \bfg_t(\bfx_{t+\frac{1}{2}})  \rangle | \bfx_t \big]   \\
		& \leq \mathbb{E} \big[ \| \nabla f(\bfx_t)\|  \cdot \| \bfg_t(\bfx_t) - \bfg_t(\bfx_{t+\frac{1}{2}}) \| | \bfx_t \big]  \\
		& \stackrel{(a)}{\leq} L  \mathbb{E} \big[ \| \nabla f(\bfx_t)\| \cdot \| \bfx_t - \bfx_{t+\frac{1}{2}} \| | \bfx_t \big]  \\
		& \stackrel{(b)}{=}  L  \rho  D_0 \| \nabla f(\bfx_t)\| 
	\end{align*}
	where $(a)$ follows from As.~\ref{as.2}; and $(b)$ is because $\bfx_t - \bfx_{t+\frac{1}{2}} = -\bfeps_t$ and its norm is bounded by \eqref{eq.apex.eps_norm_bound}. 
	
	This inequality ensures that
	\begin{align*}
		&\eta_t \mathbb{E} \big[ \big\langle \nabla f(\bfx_t), \nabla f(\bfx_t) - \bfg_t(\bfx_{t+\frac{1}{2}})  \big\rangle | \bfx_t \big] \\
		&\leq  L D_0 \rho \eta_t \| \nabla f(\bfx_t)\| \\
		&\leq \frac{L \eta_t^2 \| \nabla f(\bfx_t)\|^2 }{2} + \frac{L D_0^2 \rho^2}{2}
	\end{align*}
	where the last inequality is because $\rho D_0 \eta_t \| \nabla f(\bfx_t)\| \leq \frac{1}{2} \eta_t^2 \| \nabla f(\bfx_t)\|^2 + \frac{1}{2} \rho^2 D_0^2 $. Taking expectation w.r.t. $\bfx_t$ finishes the proof.
\end{proof}

\begin{lemma}\label{apdx.lemma2}
	Alg.~\ref{alg.sam} (or equivalently iteration~\eqref{eq.alg_rewrite}) ensures that
	\begin{align*}
		\mathbb{E} \big[ \| \bfg_t(\bfx_{t+\frac{1}{2}}) \|^2 \big] \leq 2 L^2 D_0^2 \rho^2 + 2 \mathbb{E} \big[ \|  \nabla f(\bfx_t) \|^2 \big] + 2 \sigma^2.
	\end{align*}
\end{lemma}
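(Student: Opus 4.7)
The plan is to bound $\mathbb{E}\|\bfg_t(\bfx_{t+\frac{1}{2}})\|^2$ by first isolating the perturbation-induced gradient change via Young's inequality, then using unbiasedness of $\bfg_t(\bfx_t)$ to handle the remaining term tightly. Specifically, I would start from the decomposition
\begin{align*}
\bfg_t(\bfx_{t+\tfrac{1}{2}}) = \bfg_t(\bfx_t) + \big[\bfg_t(\bfx_{t+\tfrac{1}{2}}) - \bfg_t(\bfx_t)\big]
\end{align*}
and apply the elementary inequality $\|a+b\|^2 \le 2\|a\|^2 + 2\|b\|^2$ to obtain
\begin{align*}
\|\bfg_t(\bfx_{t+\tfrac{1}{2}})\|^2 \le 2\|\bfg_t(\bfx_t)\|^2 + 2\|\bfg_t(\bfx_{t+\tfrac{1}{2}}) - \bfg_t(\bfx_t)\|^2.
\end{align*}

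Next, I would control the second term on the right using the Lipschitz property (As.~\ref{as.2}) combined with the already-established bound \eqref{eq.apex.eps_norm_bound} on the perturbation length. Since $\bfx_{t+\frac{1}{2}} - \bfx_t = \bfeps_t$, this yields $\|\bfg_t(\bfx_{t+\frac{1}{2}}) - \bfg_t(\bfx_t)\|^2 \le L^2\|\bfeps_t\|^2 \le L^2 D_0^2 \rho^2$ deterministically, so no expectation is needed for this piece.

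For the first term, I would \emph{avoid} applying Young's inequality again (which would give loose constants), and instead use the conditional bias-variance decomposition. Since $\mathbb{E}[\bfg_t(\bfx_t)\mid \bfx_t] = \nabla f(\bfx_t)$ by As.~\ref{as.3}, we have
\begin{align*}
\mathbb{E}\big[\|\bfg_t(\bfx_t)\|^2 \mid \bfx_t\big] = \|\nabla f(\bfx_t)\|^2 + \mathbb{E}\big[\|\bfg_t(\bfx_t) - \nabla f(\bfx_t)\|^2 \mid \bfx_t\big] \le \|\nabla f(\bfx_t)\|^2 + \sigma^2,
\end{align*}
where the last inequality is the bounded-variance part of As.~\ref{as.3}. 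Taking total expectation gives $\mathbb{E}\|\bfg_t(\bfx_t)\|^2 \le \mathbb{E}\|\nabla f(\bfx_t)\|^2 + \sigma^2$.

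Combining the three estimates, after taking expectation, produces exactly $\mathbb{E}\|\bfg_t(\bfx_{t+\frac{1}{2}})\|^2 \le 2\mathbb{E}\|\nabla f(\bfx_t)\|^2 + 2\sigma^2 + 2L^2 D_0^2 \rho^2$, as claimed. The only subtle point — essentially the one obstacle to a completely mechanical proof — is recognizing that a naive triple-split via $\|a+b+c\|^2 \le 3(\|a\|^2+\|b\|^2+\|c\|^2)$ yields constants $(3,3,3)$ instead of $(2,2,2)$; to get the tight constant $2$ on the variance term, one must exploit the orthogonality granted by unbiasedness before invoking Young's inequality, which is why the decomposition is done in two stages rather than one.
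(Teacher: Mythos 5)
Your proposal is correct and follows essentially the same route as the paper's proof: a single Young's-inequality split isolating $\bfg_t(\bfx_{t+\frac{1}{2}})-\bfg_t(\bfx_t)$ (bounded deterministically via As.~\ref{as.2} and \eqref{eq.apex.eps_norm_bound}), followed by the conditional bias--variance identity for $\mathbb{E}[\|\bfg_t(\bfx_t)\|^2\mid\bfx_t]$ using As.~\ref{as.3}. Your remark about needing orthogonality rather than a second Young's step to get the constant $2$ on the variance term is exactly the (implicit) content of the paper's final inequality.
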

\begin{proof}
	The proof starts with bounding $\| \bfg_t(\bfx_{t+\frac{1}{2}})\|$ via
	\begin{align*}
		\| \bfg_t(\bfx_{t+\frac{1}{2}}) \|^2 & = \| \bfg_t(\bfx_{t+\frac{1}{2}})  - \bfg_t(\bfx_t) + \bfg_t(\bfx_t) \|^2 \\
		& \leq 2 \| \bfg_t(\bfx_{t+\frac{1}{2}}) - \bfg_t(\bfx_t) \|^2 + 2 \| \bfg_t (\bfx_t) \|^2 \\
		& \stackrel{(a)}{\leq}  2 L^2 \|\bfx_t - \bfx_{t+\frac{1}{2}} \|^2 + 2 \| \bfg_t(\bfx_t) \|^2  \\
		& \stackrel{(b)}{=} 2 L^2 D_0^2 \rho^2 + 2 \| \bfg_t(\bfx_t) - \nabla f(\bfx_t) + \nabla f(\bfx_t) \|^2  
	\end{align*}
	where $(a)$ is the result of As.~\ref{as.2}; and $(b)$ is because $\bfx_t - \bfx_{t+\frac{1}{2}} = -\bfeps_t$ and its norm is bounded in \eqref{eq.apex.eps_norm_bound}. 
	
	Taking expectation conditioned on $\bfx_t$, we have
	\begin{align*}
		&\mathbb{E}\big[ \| \bfg_t( \bfx_{t+\frac{1}{2}}) \|^2 | \bfx_t \big]	 \\
		& \leq 2 L^2 D_0^2 \rho^2 + 2 \mathbb{E}\big[ \| \bfg_t(\bfx_t) - \nabla f(\bfx_t) + \nabla f(\bfx_t) \|^2 | \bfx_t \big] \nonumber \\
		& \leq 2 L^2 D_0^2 \rho^2 + 2 \|  \nabla f(\bfx_t) \|^2 + 2 \sigma^2 \nonumber
	\end{align*}
	where the last inequality is from As.~\ref{as.3}. Taking expectation w.r.t. the randomness of $\bfx_t$ finishes the proof.
\end{proof}


\subsection{Proof of Theorem~\ref{thm.psam}}
\begin{proof}
%
	Using As.~\ref{as.2}, we have that
	\begin{align*}
		&f(\bfx_{t+1}) - f (\bfx_t)  \\
		& \leq 	\langle \nabla f(\bfx_t), \bfx_{t+1} - \bfx_t \rangle + \frac{L}{2} \| \bfx_{t+1} - \bfx_t \|^2 \\
		& = - \eta_t \langle \nabla f(\bfx_t), \bfg_t(\bfx_{t+\frac{1}{2}})  \rangle + \frac{L\eta_t^2}{2} \| \bfg_t(\bfx_{t+\frac{1}{2}}) \|^2 \nonumber \\
		& = - \eta_t \langle \nabla f(\bfx_t), \bfg_t(\bfx_{t+\frac{1}{2}}) - \nabla f(\bfx_t) + \nabla f(\bfx_t)  \rangle + \\
		&\qquad \frac{L\eta_t^2}{2} \| \bfg_t(\bfx_{t+\frac{1}{2}})\|^2 \nonumber \\
		& = - \eta_t \| \nabla f(\bfx_t) \|^2 - \eta_t \langle \nabla f(\bfx_t), \bfg_t(\bfx_{t+\frac{1}{2}}) - \nabla f(\bfx_t)  \rangle + \\
		&\qquad \frac{L\eta_t^2}{2} \| \bfg_t(\bfx_{t+\frac{1}{2}}) \|^2 .
	\end{align*}
	
	Taking expectation, then plugging in Lemmas \ref{apdx.lemma1} and \ref{apdx.lemma2}, we have
	\begin{align*}
		\mathbb{E}\big[  f(\bfx_{t+1}) - f (\bfx_t)  \big] &\leq -  \bigg( \eta_t - \frac{3L\eta_t^2}{2} \bigg) \mathbb{E}\big[ \| \nabla f(\bfx_t )\|^2 \big] + \\
		&\qquad \frac{L\rho^2 D_0^2}{2}  + L^3 \eta_t^2 \rho^2 D_0^2 + L \eta_t^2 \sigma^2.
	\end{align*}

	As the parameter selection ensures that $\eta_t \equiv \eta = \frac{\eta_0}{ \sqrt{T}} \leq \frac{2}{3L}$, dividing both sides by $\eta$ and rearranging the terms give
	\begin{align*}
		&\bigg( 1 - \frac{3L\eta}{2} \bigg) \mathbb{E}\big[ \| \nabla f(\bfx_t )\|^2 \big]  \leq \frac{\mathbb{E}\big[  f (\bfx_t) - f(\bfx_{t+1})  \big]}{\eta} + \\
		&\hspace{3cm} \frac{L\rho^2 D_0^2}{2 \eta }  + L^3 \eta \rho^2 D_0^2 + L \eta \sigma^2.
	\end{align*}

	\begin{figure*}[t]
	\centering
	\includegraphics[width=.75\textwidth]{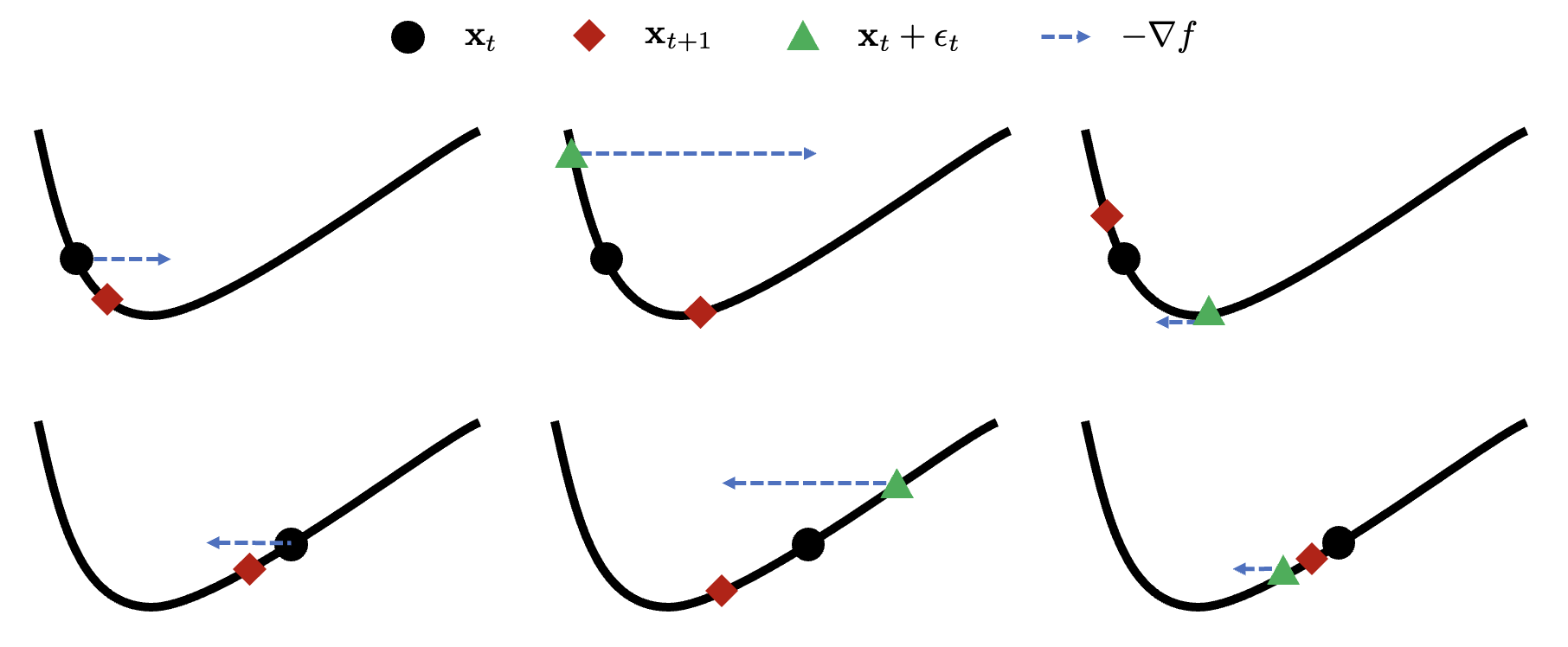}
	\caption{Behavior of SGD (left), ideal SAM (middle), and SAM with stochastic noise (right) near asymmetric valley. First row: transition from a sharper slope to a flatter one; second row: minimizing a flatter slope. Comparing middle with left reveals why SAM is helpful for finding a solution on flatter slope that generalizes better. The right part shows why gradient noise causes AMD.}
	 \label{fig.asym}
	 \vspace{-0.1cm}
\end{figure*}

	Summing over $t$, we have
	\begin{align*}
		&\bigg( 1 - \frac{3L\eta}{2} \bigg) \frac{1}{T}\sum_{t=0}^{T-1}\mathbb{E}\big[ \| \nabla f(\bfx_t )\|^2 \big] \\
		& \leq \frac{\mathbb{E}\big[  f (\bfx_0) - f(\bfx_T)  \big]}{\eta T}  + \frac{L\rho^2 D_0^2}{2 \eta }  + L^3 \eta \rho^2 D_0^2  + L \eta \sigma^2 \\
		& \stackrel{(a)}{\leq} \frac{  f (\bfx_0) - f^*  }{\eta T}  + \frac{L\rho^2 D_0^2}{2 \eta }  + L^3 \eta \rho^2 D_0^2 + L \eta \sigma^2 \nonumber \\
		& = \frac{  f (\bfx_0) - f^*  }{\eta_0 \sqrt{T}}  +  \frac{L\rho_0^2 D_0^2}{2 \eta_0 \sqrt{T}}  + \frac{L^3 \eta_0 \rho_0^2 D_0^2}{T^{3/2}}  + \frac{L \eta_0 \sigma^2}{\sqrt{T}} \nonumber 
	\end{align*}
	where (a) uses As.~\ref{as.1}, and the last equation is by plugging in the value of $\rho$ and $\eta$. This completes the proof to the first part. 
	
	For the second part of this theorem, we have
	\begin{align*}
		&\mathbb{E}\big[ \| \nabla f(\bfx_t + \bfeps_t )\|^2 \big] \\
		& = \mathbb{E}\big[ \| \nabla f(\bfx_t + \bfeps_t ) +\nabla f(\bfx_t) - \nabla f(\bfx_t) \|^2 \big] \\
		& \leq  2 \mathbb{E}\big[ \| \nabla f(\bfx_t \|^2 \big] + 2 \mathbb{E}\big[ \| \nabla f(\bfx_t + \bfeps_t ) - \nabla f(\bfx_t) \|^2 \big] \\
		& \leq  2 \mathbb{E}\big[ \| \nabla f(\bfx_t \|^2 \big] + 2 L^2 \rho^2 D_0^2 \\
		& = 2 \mathbb{E}\big[ \| \nabla f(\bfx_t \|^2 \big] +  \frac{2 L^2 \rho_0^2 D_0^2}{T}.
	\end{align*}
	Averaging over $t$ completes the proof.
\end{proof}

\section{VaSSO as an OP approach} \label{sec.apdx.VaSSO}
VaSSO in~\cite{li2023} can be also viewed as an objective preconditioning (OP) approach. Indeed, VaSSO acquires $\boldsymbol{\epsilon}_t$ via
\begin{align*}
    \mathbf{d}_t &= (1 - \theta) \mathbf{d}_{t-1} + \theta \mathbf{g}_t (\mathbf{x}_t) \\
    \boldsymbol{\epsilon}_t &= \mathop{\arg\max}_{\| \boldsymbol{\epsilon} \| \le \rho} f(\mathbf{x}_t) + \langle \mathbf{d}_t, \boldsymbol{\epsilon}_t \rangle = \rho \mathbf{d}_t / \| \mathbf{d}_t \|
\end{align*}
where $\mathbf{d}_t$ represents the running average of $\{ \mathbf{g}_\tau (\mathbf{x}_\tau) \}_{\tau=1}^t$. With OP having $\mathbf{D}_t = \mathbf{I}_d$, and 
\begin{equation*}
    \mathbf{C}_t = \frac{1-\theta}{\| \mathbf{g}_t (\mathbf{x}_t) \|^2} \mathbf{d}_{t-1} \mathbf{g}^\top_t(\mathbf{x}_t) + \theta \mathbf{I}_d 
\end{equation*}
it follows that 
\begin{equation*}
    \mathbf{C}_t \mathbf{g}_t (\mathbf{x}_t) = (1 - \theta) \mathbf{d}_{t - 1} + \theta \mathbf{g}_t (\mathbf{x}_t) = \mathbf{d}_t,
\end{equation*}
thus recovering the VaSSO method developed in~\cite{li2023}.

\section{Additional case study for AMD near an asymmetric valley}\label{sec.apdx.AMD}
AMD can be also observed when studying the convergence behavior of SAM near an asymmetric valley~\cite{he2019}. Simply put, an asymmetric valley is an area where the loss function grows at different rates at the positive and negative directions; see the black curve in Fig.~\ref{fig.asym}. Asymmetric valleys widely appear in the training loss of DNNs, where a solution biased toward the flatter slope can provably generalize better~\cite{he2019}. For the ease of illustration, we consider a one dimensional asymmetric valley while our arguments extends to more complicated cases. As shown in Fig.~\ref{fig.asym}, ideal SAM (without gradient noise) finds a desirable solution faster than SGD. In comparison, noisy SAM can significantly hurt the performance, as detailed in the following.

Consider the behavior of (ideal) SAM under two cases: i) transiting from sharper to flatter slope; and ii) minimizing the flatter slope. For case i), it can be observed that ideal SAM update employs gradient at an informative adversarial model, which is helpful to accelerate the transition from sharper slope to flatter one. This is not always true for the non-ideal SAM under gradient noise, as the adversarial model can have negative impact on moving to a flatter slope. In case ii), the flatter slope is not easy to be minimized since the gradient tends to have small magnitude here. Once again, ideal SAM accelerate this procedure by using a larger gradient at adversarial model; however, noisy SAM converges slowly when the gradient is perturbed to the negative direction due to the low SNR. 

\begin{figure}[t]
	\centering
		\subfloat[]{\includegraphics[width=.49\linewidth]{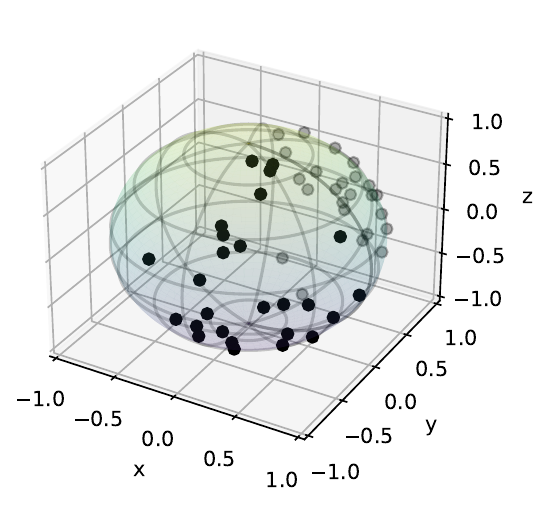}}
		\subfloat[]{\includegraphics[width=.49\linewidth]{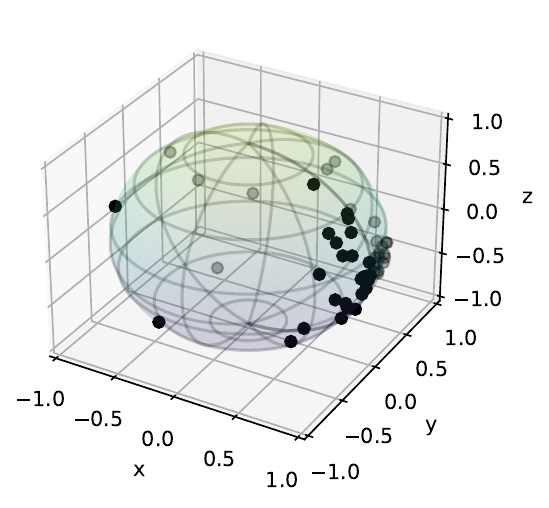}} 
	\caption{Comparison of the adversarial models in (a) SAM and (b) infoSAM.}
	 \label{fig.example}
\end{figure}

\section{Numerical examples for infoSAM}\label{sec.apdx.infosam}
To understand how infoSAM works, consider the case where $\nabla f(\bfx_t) = [0.2, -0.02, 0.01]$, and the stochastic gradient $\bfg_t(\bfx_t) = \nabla f(\bfx_t) + \boldsymbol{\xi}$. Let the stochastic noise $\boldsymbol{\xi}$ has a covariance matrix $\alpha \cdot \diag([0.2, 2, 1])$. We tune $\alpha$ so that the $\text{SNR}=0.1$. Without loss of generality, we assume $\bfx_t=\mathbf{0}$ so that the adversarial model is simply $\bfeps_t$. In the noise-free case, $\bfeps_t$ should be proportional to $\nabla f(\bfx_t)$, i.e., large magnitude in $x$-axis but small in $y$ and $z$ axises. With the gradient noise however, the corresponding $\bfeps_t$ obtained via SAM and infoSAM are plotted in Fig \ref{fig.example} (a) and (b), respectively. It can be observed that SAM is misled by the large noise on $y$- and $z$-axeses, and more than half of simulated $\bfeps_t$ are small on $x$-axis, suggesting a sever AMD issue. InfoSAM, on the contrary, generates $\bfeps_t$ concentrated around $[1, 0, 0]$, meaning that the information of x-axis is well captured. 

In the asymmetric valley example, although infoSAM will not entirely eliminate the AMD issue, it still cautiously finds an adversarial model. In other words, whenever the gradient noise is too large, the perturbation on the corresponding dimension will be inversely scaled with the variance of noise, so that infoSAM would not making large mistakes. 

\section{More details on numerical experiments}\label{apdx.num}

\subsection{CIFAR10 and CIFAR100}
For data augmentation, standard implementation including random crop, random horizontal flip, normalization and cutout \cite{cutout2017} are leveraged. Hyperparameters used in our experiments are summarized in Tabs.~\ref{tab.hyper-cifar10} and~\ref{tab.hyper-cifar100}. 

 \begin{table}[t]
 \tabcolsep=0.15cm
\caption{Hyperparameters for training from scratch on CIFAR10}
    \centering
   \small
    \begin{tabular}{ccccc}
    \toprule
      \textbf{ResNet-18}   & SGD & SAM & ASAM &infoSAM\\
      \midrule
      epoch&\multicolumn{4}{c}{200}\\
      batch size&\multicolumn{4}{c}{256}\\
      initial learning rate &\multicolumn{4}{c}{0.1}\\
      learning rate decay &\multicolumn{4}{c}{cosine}\\
      weight decay &$5\times10^{-4}$&$1\times10^{-3}$&$1\times10^{-3}$&$1\times10^{-3}$\\
      $\rho$   &-  &0.1&0.1&0.1\\
      $\alpha$ & - & - & - & 0.05 \\
      \midrule
      \textbf{DenseNet-121}   & SGD & SAM &ASAM & infoSAM\\
      \midrule
      epoch&\multicolumn{4}{c}{200}\\
      batch size&\multicolumn{4}{c}{256}\\
      initial learning rate &\multicolumn{4}{c}{0.1}\\
      learning rate decay &\multicolumn{4}{c}{cosine}\\
      weight decay &$5\times10^{-4}$&$1\times10^{-3}$&$1\times10^{-3}$&$5\times10^{-4}$\\
      $\rho$&-&0.1&0.1& 0.1\\
      $\alpha$ & - & - & - & 0.01 \\
      \midrule
      \textbf{WRN-28-10}   & SGD & SAM &ASAM & infoSAM\\
      \midrule
      epoch&\multicolumn{4}{c}{200}\\
      batch size&\multicolumn{4}{c}{256}\\
      initial learning rate &\multicolumn{4}{c}{0.1}\\
      learning rate decay &\multicolumn{4}{c}{cosine}\\
      weight decay &$5\times10^{-4}$&$1\times10^{-3}$&$1\times10^{-3}$&$5\times10^{-4}$\\
      $\rho$&-&0.1&0.1& 0.1\\
      $\alpha$ & - & - & - & 0.05 \\
	  \midrule
      \textbf{PyramidNet-110}   & SGD & SAM &ESAM\\
      \midrule
      epoch&\multicolumn{4}{c}{300}\\
      batch size&\multicolumn{4}{c}{128}\\
      initial learning rate &\multicolumn{4}{c}{0.05}\\
      learning rate decay &\multicolumn{4}{c}{cosine}\\
      weight decay &$5\times10^{-4}$&$1\times10^{-3}$&$1\times10^{-3}$&$5\times10^{-4}$\\
      $\rho$&-&0.1&0.1& 0.2\\
      $\alpha$ & - & - & - & 0.05 \\
        \bottomrule
    \end{tabular}
    \label{tab.hyper-cifar10}
\end{table}

 \begin{table}[t]
 \tabcolsep=0.15cm
\caption{Hyperparameters for training from scratch on CIFAR100}
    \centering
   \small
    \begin{tabular}{c|cccc}
    \toprule
      \textbf{ResNet-18 }   & SGD & SAM & ASAM &infoSAM\\
      \midrule
      epoch&\multicolumn{4}{c}{200}\\
      batch size&\multicolumn{4}{c}{256}\\
      initial learning rate &\multicolumn{4}{c}{0.1}\\
      learning rate decay &\multicolumn{4}{c}{cosine}\\
      momentum &\multicolumn{4}{c}{0.9} \\
      weight decay &$5\times10^{-4}$&$1\times10^{-3}$&$1\times10^{-3}$&$1\times10^{-3}$\\
      $\rho$   &-  &0.2&0.2&0.2\\
      $\alpha$ & - & - & - & 0.025 \\
      \midrule
      \textbf{DenseNet-121}   & SGD & SAM &ASAM & infoSAM\\
      \midrule
      epoch&\multicolumn{4}{c}{200}\\
      batch size&\multicolumn{4}{c}{256}\\
      initial learning rate &\multicolumn{4}{c}{0.1}\\
      learning rate decay &\multicolumn{4}{c}{cosine}\\
      momentum &\multicolumn{4}{c}{0.9} \\
      weight decay &$5\times10^{-4}$&$1\times10^{-3}$&$1\times10^{-3}$&$5\times10^{-4}$\\
      $\rho$&-&0.2&0.2& 0.2\\
      $\alpha$ & - & - & - & 0.001 \\
      \midrule
      \textbf{WRN-28-10}   & SGD & SAM &ASAM & infoSAM\\
      \midrule
      epoch&\multicolumn{4}{c}{200}\\
      batch size&\multicolumn{4}{c}{256}\\
      initial learning rate &\multicolumn{4}{c}{0.1}\\
      learning rate decay &\multicolumn{4}{c}{cosine}\\
      momentum &\multicolumn{4}{c}{0.9} \\
      weight decay &$5\times10^{-4}$&$1\times10^{-3}$&$1\times10^{-3}$&$5\times10^{-4}$\\
      $\rho$&-&0.2&0.2& 0.2\\
      $\alpha$ & - & - & - & 0.025 \\
	  \midrule
      \textbf{PyramidNet-110}   & SGD & SAM &ASAM & infoSAM\\
      \midrule
      epoch&\multicolumn{4}{c}{300}\\
      batch size&\multicolumn{4}{c}{128}\\
      initial learning rate &\multicolumn{4}{c}{0.05}\\
      learning rate decay &\multicolumn{4}{c}{cosine}\\
      momentum &\multicolumn{4}{c}{0.9} \\
      weight decay &$5\times10^{-4}$&$1\times10^{-3}$&$1\times10^{-3}$&$5\times10^{-4}$\\
      $\rho$&-&0.2&0.2& 0.2\\
      $\alpha$ & - & - & - & 0.001 \\
        \bottomrule
    \end{tabular}
    \label{tab.hyper-cifar100}
\end{table}

\subsection{ImageNet}
ImageNet~\cite{imagenet2009} has 1,281,167 images from 1000 classes for training and 50,000 images for validation. Due to the constraints on computational resources, we report the averaged results over $2$ independent runs. For this dataset, we randomly resize and crop all images to a resolution of $224\times 224$, and apply random horizontal flip, normalization during training. Hyperparameters for this dataset can be found in Tab.~\ref{tab.hyper-imagenet}. 

\begin{table}[t]
 \tabcolsep=0.15cm
\caption{Hyperparameters for training from scratch on ImageNet}
    \centering
   \small
    \begin{tabular}{c|cccc}
    \toprule
      \textbf{ResNet-18}   & SGD & SAM & ASAM &infoSAM\\
      \midrule
      epoch&\multicolumn{4}{c}{90}\\
      batch size&\multicolumn{4}{c}{128}\\
      initial learning rate &\multicolumn{4}{c}{0.05}\\
      learning rate decay &\multicolumn{4}{c}{cosine}\\
      momentum &\multicolumn{4}{c}{0.9} \\
      weight decay &$1\times10^{-4}$&$1\times10^{-4}$&$1\times10^{-4}$&$1\times10^{-4}$\\
      $\rho$   &-  &0.075&0.075&0.075\\
      $\alpha$ & - & - & - & 0.005 \\
        \bottomrule
    \end{tabular}
    \label{tab.hyper-imagenet}
\end{table}


\end{document}